\begin{document}


\title{ARAC: Adaptive Regularized Multi-Agent Soft Actor-Critic in Graph-Structured Adversarial Games}

\author{Ruochuan Shi}
\affiliation{
  \institution{Institute of Automation, Chinese Academy of Sciences}
  \city{Beijing}
  \country{China}
}
\affiliation{
  \institution{School of Artificial Intelligence, University of Chinese Academy of Sciences}
  \city{Beijing}
  \country{China}
}
\email{shiruochuan2025@ia.ac.cn}

\author{Runyu Lu}
\affiliation{
  \institution{School of Artificial Intelligence, University of Chinese Academy of Sciences}
  \city{Beijing}
  \country{China}
}
\affiliation{
  \institution{Institute of Automation, Chinese Academy of Sciences}
  \city{Beijing}
  \country{China}
}
\email{lurunyu17@mails.ucas.ac.cn}

\author{Yuanheng Zhu}
\authornote{Corresponding Authors}
\affiliation{
  \institution{Institute of Automation, Chinese Academy of Sciences}
  \city{Beijing}
  \country{China}
}
\affiliation{
  \institution{School of Artificial Intelligence, University of Chinese Academy of Sciences}
  \city{Beijing}
  \country{China}
}
\email{yuanheng.zhu@ia.ac.cn}

\author{Dongbin Zhao}
\authornotemark[1]
\affiliation{
  \institution{Institute of Automation, Chinese Academy of Sciences}
  \city{Beijing}
  \country{China}
}
\affiliation{
  \institution{School of Artificial Intelligence, University of Chinese Academy of Sciences}
  \city{Beijing}
  \country{China}
}
\email{dongbin.zhao@ia.ac.cn}


\begin{abstract}

In graph-structured multi-agent reinforcement learning (MARL) adversarial tasks such as pursuit and confrontation, agents must coordinate under highly dynamic interactions, where sparse rewards hinder efficient policy learning. We propose \textit{Adaptive Regularized Multi-Agent Soft Actor-Critic} (ARAC), which integrates an attention-based graph neural network (GNN) for modeling agent dependencies with an adaptive divergence regularization mechanism. The GNN enables expressive representation of spatial relations and state features in graph environments. Divergence regularization can serve as policy guidance to alleviate the sparse reward problem, but it may lead to suboptimal convergence when the reference policy itself is imperfect. The adaptive divergence regularization mechanism enables the framework to exploit reference policies for efficient exploration in the early stages, while gradually reducing reliance on them as training progresses to avoid inheriting their limitations. Experiments in pursuit and confrontation scenarios demonstrate that ARAC achieves faster convergence, higher final success rates, and stronger scalability across varying numbers of agents compared with MARL baselines, highlighting its effectiveness in complex graph-structured environments.
\end{abstract}

\begin{CCSXML}
<ccs2012>
   <concept>
       <concept_id>10010147.10010257.10010258.10010261.10010275</concept_id>
       <concept_desc>Computing methodologies~Multi-agent reinforcement learning</concept_desc>
       <concept_significance>500</concept_significance>
       </concept>
 </ccs2012>
\end{CCSXML}

\ccsdesc[500]{Computing methodologies~Multi-agent reinforcement learning}

\keywords{Multi-agent Reinforcement Learning, Adversarial Games, Graph Neural Network, Divergence Regularization}


\maketitle

\section{Introduction}

Multi-Agent Reinforcement Learning (MARL) has recently attracted significant attention in various domains such as multi-agent cooperation \cite{chai2024aligning}, competitive games \cite{vinyals2019grandmaster}, traffic control\cite{wiering2000multi}, and resource allocation \cite{marino2025decentralized}. In real-world environments, the interactions among multiple agents are often constrained by geographical structures, obstacle distributions, and communication ranges. Directly modeling such environments in continuous space (\cite{oyler2016pursuit, liang2019differential, garcia2020multiple}) not only leads to high computational complexity but also makes it difficult to capture both local relationships and global constraints among agents. In contrast, modeling the environment as a graph provides a more intuitive representation of the agents' reachable positions, feasible movement paths, and mutual visibility or interaction relationships. Graph nodes can represent locations or regions where agents may reside, while edges describe the movability or interaction reachability between these nodes. The presence of obstacles can be modeled by removing the corresponding edges. This graph-based representation reduces the complexity of environment modeling and facilitates the application of graph algorithms for path computation, reachability analysis, and other tasks.

In graph-structured environments, efficiently extracting structural features of agents and their neighborhoods is key to improving MARL performance \cite{munikoti2023gnnmarl}. Graph Neural Networks (GNNs) have shown strong capability in modeling structured data, effectively capturing both local and global topological information through message passing mechanisms. When combined with MARL, GNNs can provide each agent with a structured state representation, thereby improving policy generalization and adaptability to complex graph structures  \cite{JiangDHL20DGN, mcclellan2024E2GN2, LiLWC0MC024Grasper, lu2025equilibrium}. However, commonly used methods such as GCN \cite{KipfW17GCN} and GAT \cite{velivckovic2017graph}, while capable of aggregating neighborhood information, often face limitations in highly dynamic and many-to-many interaction scenarios. Specifically, their performance can be hindered by restricted information propagation ranges and limited adaptability in attention weight allocation, making it difficult to fully capture critical interaction patterns.

On the other hand, in sparse-reward tasks, MARL typically suffers from low exploration efficiency, with the training process prone to becoming stuck in local optima. Reference policy guidance has been widely adopted in reinforcement learning to address this issue \cite{hare2019dealing, GoecksGLVW20Integrating}. The core idea is to introduce a divergence term (e.g., Kullback–Leibler divergence) between the current policy and a reference policy into the loss function, forming a divergence-regularized policy optimization framework. This approach can accelerate policy convergence in the early training stage, but its effectiveness heavily depends on the quality of the reference policy and the choice of the regularization coefficient. In particular, in multi-agent settings, a fixed regularization coefficient often requires extensive tuning across different environments, and when the reference policy is suboptimal, long-term reliance can constrain final performance.

In this work, we introduce a MARL framwork \textit{Adaptive Regularized Multi-Agent Soft Actor-Critic} (ARAC) for multi-agent adversarial games. 
The main contributions of this paper are summarized as follows:
\begin{itemize}
    \item \textbf{Graph-structured environment modeling and feature representation:} We propose constructing a global state feature matrix by combining shortest-path distances, potential attack damage, agent health points (HP), and survival status, and design an attention-based encoder–decoder GNN architecture to enhance agents' perception and decision-making in complex graph environments.
    \item \textbf{Adaptive divergence-regularized policy optimization:} We develop an adaptive divergence regularization mechanism within the ARAC framework, which enables the algorithm to effectively exploit reference policy guidance in the early stage of training while gradually reducing reliance on potentially suboptimal reference policies in later stages.  
    \item \textbf{Extensive multi-scenario evaluation:} We conduct comprehensive experiments on both pursuit and confrontation multi-agent tasks, comparing ARAC with multiple baselines, and further assess its scalability under different numbers of agents.
\end{itemize}

\section{Related Work}

\subsection{Multi-Agent Reinforcement Learning}
MARL aims to learn optimal policies in environments involving multiple agents through interaction with both the environment and other agents. In cooperative, competitive, and mixed tasks, MARL must address challenges such as state dependencies among agents, the exponential growth of the joint action space, and the non-stationarity caused by concurrent learning. Classical MARL methods include value-decomposition-based algorithms such as QMIX \cite{rashid2020monotonic} and Value Decomposition Networks (VDN) \cite{sunehag2018value} for cooperative tasks, as well as policy-gradient-based methods for more general scenarios.

Soft Actor-Critic (SAC) \cite{haarnoja2018soft,haarnoja2018soft2} is an off-policy deep reinforcement learning algorithm that incorporates the maximum entropy principle, offering strong exploration capability and stable convergence. Some previous work \cite{iqbal2019actor, pu2021decomposed, hu2023graph} has extended SAC to multi-agent settings. However, in sparse-reward environments, even MASAC may suffer from inefficient exploration and slow convergence, motivating the incorporation of external guidance or reward shaping to enhance performance.

\subsection{Regularized Policy Optimization}

Regularized policy optimization enhances policy learning by introducing constraints or regularization terms that bias the learned policy towards a reference distribution, dataset, or prior knowledge. This paradigm has been widely applied across different contexts in reinforcement learning.

\textbf{Behavior Cloning (BC).}  
BC is a supervised learning approach where the policy directly imitates the action distribution of an expert or reference policy \cite{zare2024IL}. By minimizing the cross-entropy or KL divergence between the current policy and the reference, BC can accelerate policy initialization and improve early-stage convergence, provided the reference policy is of reasonable quality and data coverage is sufficient. However, if the reference policy is suboptimal or the data distribution differs significantly from the environment, BC suffers from performance limitations and poor generalization.

\textbf{Offline Reinforcement Learning.}  
In offline RL, regularization is commonly employed to mitigate distributional shift and stabilize policy learning from pre-collected datasets \cite{wu2019behavior, fujimoto2021minimalist, fujimoto2019off, kumar2019stabilizing, gao2025behavior}. These approaches typically incorporate a regularization term to constrain the policy updates and prevent overfitting to out-of-distribution actions. 

\textbf{Online Reinforcement Learning.}  
In online RL, regularization is mainly used to guide exploration and accelerate early learning \cite{pertsch2021accelerating, LiLWC0MC024Grasper, agarwal2022reincarnating}. Some studies \cite{pertsch2021accelerating} have explored adaptive adjustment of regularization strength, but most focus on single-agent settings, with limited research on multi-agent and graph-structured tasks.

\subsection{Graph Neural Networks in Multi-Agent Systems}
GNNs leverage message passing and aggregation on graph structures to effectively capture structural and global dependencies among nodes. Common architectures include GCN \cite{KipfW17GCN} and GAT \cite{velivckovic2017graph}. In multi-agent tasks, GNNs are widely used to model inter-agent interactions by representing the environment as a graph, where node features and adjacency relationships are jointly learned to improve policy generalization \cite{JiangDHL20DGN, mcclellan2024E2GN2, hu2023graph}.

Nevertheless, traditional GCN and GAT face challenges in highly dynamic, many-to-many interaction scenarios, where limited propagation range and inflexible information aggregation may hinder performance. Designing GNN architectures that adapt to dynamic interactions while integrating effectively with reinforcement learning remains a challenging and important research direction.

\section{Problem Formulation}

In realistic multi-agent scenarios, continuous spatial maps are often challenging to model directly, particularly in cooperative--competitive environments where agents interact under spatial and communication constraints. To address this, we discretize the continuous environment into an undirected graph structure, which reduces modeling complexity, enables efficient computation, and facilitates a more explicit representation of interaction patterns among agents. Let the environment be represented by $G = (V, E)$, where each node $v_i \in V$ denotes a feasible location that an agent can occupy, and each edge $e_{ij} \in E$ indicates that an agent can move between the two endpoint nodes. Obstacles in the original map restrict both movement and attack: if two agents are separated by obstacles, they cannot move toward or attack each other.

The studied task is modeled as a two-team zero-sum Markov game on a graph with $n$ nodes, where agents compete in either a pursuit or a confrontation scenario:
\begin{itemize}
    \item \textbf{Pursuit scenario:} The pursuer team consists of $m$ agents, while the evader team has a single agent. The game terminates when the shortest path distance between any pursuer and the evader is less than or equal to one.
    \item \textbf{Confrontation scenario:} Both teams contain $m$ homogeneous agents with identical initial HP. An agent is eliminated when its HP decreases to zero. The game ends when all agents on one team are eliminated.
\end{itemize}
The illustrations of both scenarios are shown in Figure~\ref{fig:scenario}.
\begin{figure}[h!]
  \centering
  \includegraphics[width=\linewidth]{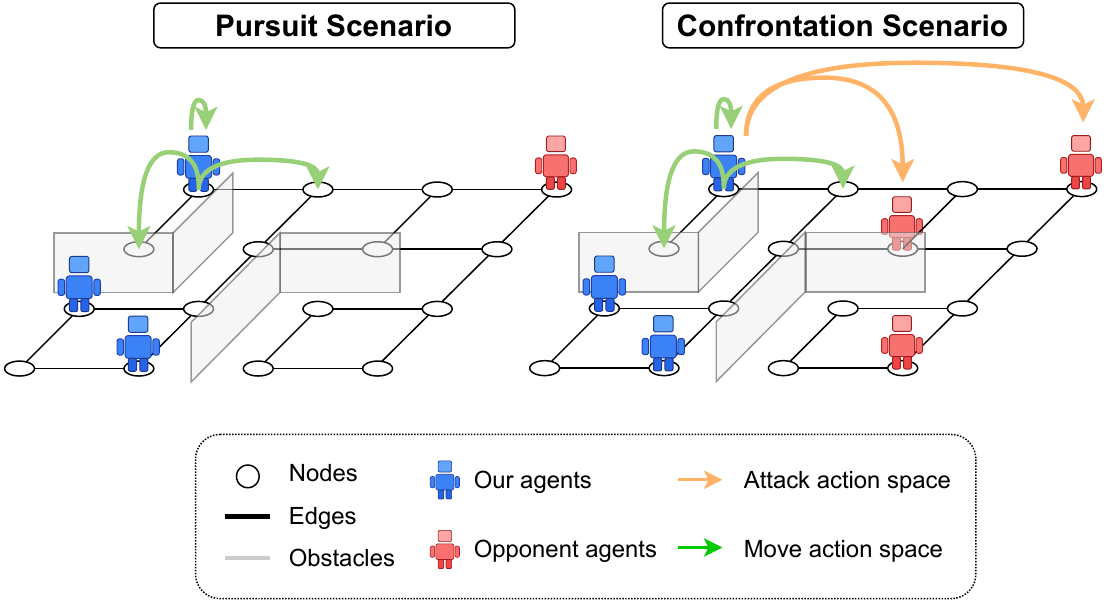}
  \caption{Illustration of pursuit and confrontation scenario.}
  \Description{Illustration of pursuit and confrontation scenario.}
  \label{fig:scenario}
\end{figure}

The Markov game is defined as $(\mathcal{S}, \mathcal{A}, \mathcal{B}, \mathcal{P}, \mathcal{R}, \gamma)$, where:
\begin{itemize}
    \item $\mathcal{S}$ is the global state space: in the pursuit scenario, it consists of all agents’ positions; in the confrontation scenario, it additionally includes each agent’s HP and survival status.  
    \item $\mathcal{A}$ is the action space of our agents, and $\mathcal{B}$ is that of the opponent. In the pursuit scenario, actions are limited to moving to a neighboring node; in the confrontation scenario, actions include moving to a neighboring node and attacking a surviving opponent agent.  
    \item $\mathcal{P}$ is the state transition probability describing the stochastic dynamics of the environment.  
    \item $\mathcal{R}$ is the reward function. In the pursuit scenario, a successful capture yields a reward of $r_\text{capture}$ to the pursuer team. In the confrontation scenario, eliminating an opponent agent yields a reward of $r_\text{kill}$, and eliminating all opponent agents yields an additional reward of $r_\text{all\_kill}$. Rewards for the opponent team are the negation of our team’s rewards due to the zero-sum nature of the game.
    \item $\gamma \in (0, 1]$ is the discount factor.
\end{itemize}

Let $\pi(\mathbf{a} \mid s)$ and $\nu(\mathbf{b} \mid s)$ denote the policies of our agents and the opponent agents, respectively. The objective of MARL is to find the optimal policy $\pi^*(\mathbf{a} \mid s)$ that maximizes the expected discounted cumulative reward over all joint actions of our agents, i.e., $\pi^* = \arg \max_{\pi} \mathbb{E}_{\pi} \left[ \sum_{t=0}^T \gamma^t r(s_t, \mathbf{a}_t) \right]$.

\section{Feature Representation and Attention-based Graph Encoding}
In multi-agent graph-structured tasks, dynamically evolving interactions greatly challenge feature extraction and policy optimization. Traditional GCN or GAT, though effective in static graphs, struggle in highly dynamic many-to-many scenarios, where restricted information propagation and rigid attention allocation lead to inadequate feature representation and suboptimal decision-making.

\begin{figure*}[h!]
  \centering
  \includegraphics[width=\linewidth]{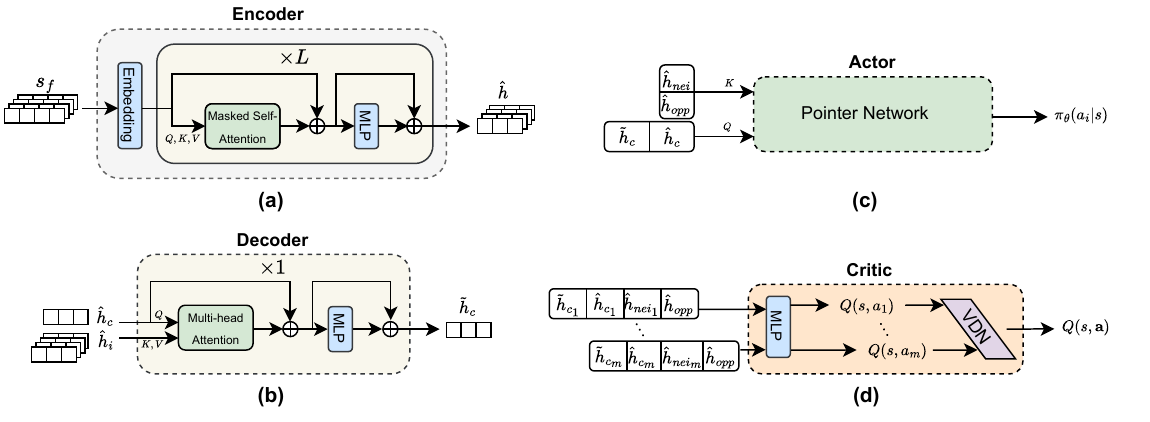}
  \caption{Structure of the feature representation graph encoding method.}
  \Description{Structure of the feature representation graph encoding method.}
  \label{fig:structure}
\end{figure*}

In the proposed MARL framework, both the actor and critic adopt an encoder--decoder architecture based on an attention mechanism for feature extraction. The attention mechanism enables each agent to dynamically adjust information weights from other nodes/agents according to the current environment state, thereby improving adaptability in highly dynamic multi-agent scenarios. The overall network structure is illustrated in Figure~\ref{fig:structure}. This section further details the construction of graph-level features and the attention-based encoder--decoder design.

\subsection{Graph-based State Representation}
Direct one-hot encoding of agent positions becomes inefficient on large graphs because it yields sparse features and does not reflect pairwise distances. We therefore represent an agent's position by the shortest-path distances from its current node to all nodes in the graph. The shortest-path distances are computed by the Floyd--Warshall algorithm. Note that these distances can be precomputed offline for static graphs; for dynamic graphs, they must be updated when the topology changes.

In the confrontation scenario, the global state further includes a damage-potential matrix that represents the potential attack damage between any pair of nodes, as well as each agent's HP and alive/dead flags. Before feeding into the network, we normalize features as follows: shortest distances are divided by the graph diameter (maximum finite distance), damage values are scaled by the maximum possible damage, and HP values are normalized by the initial HP. After normalization, these components are concatenated into a global feature representation and arranged into a feature matrix $s_f$, resulting in $s_f \in \mathbb{R}^{n \times f}$ where $n$ is the number of graph nodes and $f$ is the per-node feature dimensionality.

\subsection{Encoder}
The global feature matrix $s_f$ is projected by a linear layer to obtain node embeddings in $\mathbb{R}^{n\times d}$, which serve as the encoder input. The encoder contains $L$ masked self-attention layers (we used $L=6$ in our experiments). In each layer the query, key and value for node $i$ are computed as $q_i = W_Q h_i$, $k_i = W_K h_i$, $v_i = W_V h_i$, where $W_Q,W_K,W_V$ are trainable weight matrices and $h_i$ is the input embedding of node $i$. The raw attention score is $u_{ij} = q_i^\top k_j / \sqrt{d}$. To enforce graph locality we apply an adjacency mask in the softmax:
\( w_{ij} = \dfrac{A_{ij}\exp(u_{ij})}{\sum_{t=1}^{n} A_{it}\exp(u_{it})}, \)
where $A$ is the adjacency matrix (entries are 1 for reachable/neighbour nodes and 0 otherwise). The updated node embedding is then $h'_i = \sum_{j=1}^n w_{ij} v_j$. Each attention sub-layer is followed by a position-wise feed-forward network, residual connections and layer normalization to stabilize training. After stacking $L$ masked self-attention layers, we obtain the embedding representation $\hat{h}\in\mathbb{R}^{n\times d}$.

\subsection{Decoder}
The decoder produces an agent-centric contextual embedding for the current agent located at node $c$. Given the encoder output $\hat{h}$, we take the embedding of node $c$, denoted as $\hat{h}_c$, to generate the decoder query: $q = W_Q \hat{h}_c$. Meanwhile, for each agent node $i$, its embedding $\hat{h}_i$ is projected into the key and value representations: $k_i = W_K \hat{h}_i, v_i = W_V \hat{h}_i$. The decoder attention scores are $u_i = q^\top k_i / \sqrt{d}$ and the decoder attention weights are $w_i = \dfrac{\exp(u_i)}{\sum_{t=1}^{n}\exp(u_t)}$, producing the contextualized agent embedding $\tilde{h}_c = \sum_{i=1}^n w_i v_i$.

\subsection{Actor Design}
For generating action selection probabilities, we then concatenate $\tilde{h}_c$ with $\hat{h}_c$ to form the final query for the pointer module. The keys and values for the pointer are constructed from candidate targets: neighboring nodes $\hat{h}_{\mathrm{nei}}$ in the pursuit scenario, and both neighboring nodes $\hat{h}_{\mathrm{nei}}$ and opponents nodes $\hat{h}_{\mathrm{opp}}$ in the confrontation scenario. The pointer network \cite{VinyalsFJ15pointer} computes scores over candidate actions and produces a probability distribution (policy) over these actions; the agent's action is sampled (or chosen by argmax) from this distribution. This design makes the actor's decision explicitly agent-centric and allows focused aggregation of relevant global information.

\subsection{Critic Design}
For value estimation, the critic consumes the encoder and decoder outputs. We form per-agent local inputs by concatenating $\tilde{h}_c$, $\hat{h}_c$, $\hat{h}_{\mathrm{nei}}$ and  $\hat{h}_{\mathrm{opp}}$, pass them through per-agent critic Multilayer Perceptron (MLPs) to obtain individual value estimates $Q(s,a_i)$, and aggregate them via VDN \cite{sunehag2018value}: $Q(s,\mathbf{a}) = \sum_i Q(s,a_i)$. VDN preserves decentralizability while allowing centralized learning of cooperative strategies.

\section{Adaptive Regularized Multi-Agent Soft Actor-Critic (ARAC)}

\begin{figure*}[h!]
  \centering
  \includegraphics[width=0.68\linewidth]{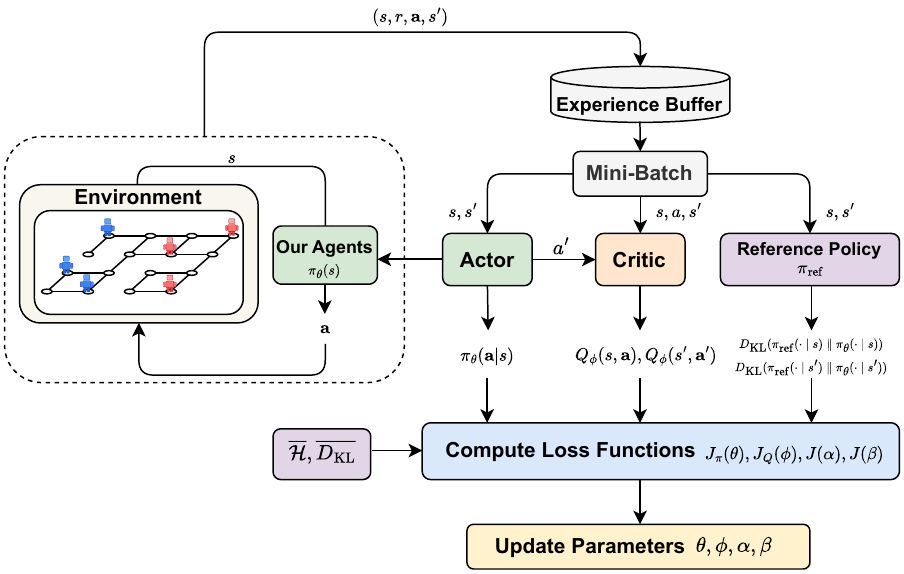}
  \caption{Overview of the ARAC training framework.}
  \label{fig:pipeline}
\end{figure*}

In MARL environments, sparse rewards pose a major challenge, as agents receive meaningful feedback only infrequently. This hinders accurate value estimation, slows convergence, increases the risk of suboptimal policies, and leads to inefficient exploration.  

A common solution is to introduce a fixed regularization weight $\beta$ that guides policy optimization towards a reference policy, thereby improving exploration. However, this approach suffers from two key limitations: (i) if the reference policy $\pi_{\mathrm{ref}}$ is suboptimal, a large $\beta$ causes over-reliance and restricts further improvement; (ii) the optimal $\beta$ is highly task-dependent, often requiring manual scheduling or decay, which is time-consuming and lacks generality. To address these issues, we propose the ARAC algorithm, whose overall pipeline is shown in Figure~\ref{fig:pipeline}.

\subsection{Modified Optimization Objective}
In traditional maximum entropy reinforcement learning, the policy is encouraged to maintain high entropy in order to balance exploration and exploitation. To address the sparse reward problem and further incorporate prior knowledge or expert demonstrations, we extend this formulation by introducing a reference policy $\pi_{\mathrm{ref}}$. This allows the learned policy $\pi$ to be regularized towards $\pi_{\mathrm{ref}}$ via a KL divergence term. The modified objective is:

\begin{equation}
\begin{aligned}
J(\pi) = \mathbb{E}_{\pi} \Big[ \sum_{t=0}^{\infty} r(s_t, \mathbf{a}_t) + \alpha \mathcal{H}(\pi(\cdot \mid s_t)) \\
 - \beta D_{\mathrm{KL}}\left( \pi_{\mathrm{ref}}(\cdot \mid s_t) \,\|\, \pi(\cdot \mid s_t) \right)\Big],
\end{aligned}
\end{equation}
where $\alpha$ controls the entropy weight, balancing exploration and exploitation, and $\beta$ controls the KL regularization strength, adjusting the influence of the reference policy. The KL term $D_{\mathrm{KL}}(\pi_{\mathrm{ref}}\|\pi)$ measures the deviation between the reference policy and the current policy. Unlike prior works (e.g., BRAC), we use $D_{\mathrm{KL}}(\pi_{\mathrm{ref}}(\cdot|s_t)\,\|\,\pi(\cdot|s_t))$ instead of $D_{\mathrm{KL}}(\pi(\cdot|s_t)\,\|\,\pi_{\mathrm{ref}}(\cdot|s_t))$, because when $\pi_{\mathrm{ref}}$ is deterministic, the latter KL term becomes undefined. 

Within the SAC framework, both the policy update and the value function update are modified to include the KL term.

For the policy update, the objective becomes:
\begin{equation}
\begin{aligned}
J_{\pi}(\theta) &= \mathbb{E}_{s_t \sim \mathcal{D}, \mathbf{a_t} \sim \pi_{\theta}} 
\left[ \alpha \log \pi_{\theta}(\mathbf{a}_t \mid s_t)
- Q_{\phi}(s_t, \mathbf{a}_t) \right. \\
&\quad \left. + \beta D_{\mathrm{KL}}\left( \pi_{\mathrm{ref}}(\cdot \mid s_t) 
\,\|\, \pi_{\theta}(\cdot \mid s_t) \right) \right],
\end{aligned}
\label{eq:policy_loss_full}
\end{equation}
where the first term encourages higher entropy, the second term drives the policy towards actions with higher estimated $Q$-values, and the third term enforces similarity to the reference policy.

The state value function is also adjusted accordingly. In standard SAC, the value function is defined as the expected $Q$-value under the policy minus the entropy term. Here, we additionally subtract the KL term:

\begin{equation}
\begin{aligned}
V(s_t)=&\mathbb{E}_{\mathbf{a}_t\sim\pi_\theta}\Big[Q_\phi(s_t,\mathbf{a}_t)-\alpha\log\pi_\theta(\mathbf{a}_t\mid s_t)\\
&-\beta D_{\mathrm{KL}}\left(\pi_{\mathrm{ref}}(\cdot\mid s_t)\parallel\pi_\theta(\cdot\mid s_t)\right)\Big].
\end{aligned}
\label{eq:value_update_full}
\end{equation}

This modification ensures that the critic also incorporates the influence of the reference policy, aligning the value estimates with the modified policy objective.

The critic follows SAC and maintains two Q-networks $Q_{\phi_1}, Q_{\phi_2}$ with corresponding target networks $Q_{\overline{\phi}_1}, Q_{\overline{\phi}_2}$. Using the minimum of the two target critics mitigates overestimation bias, while the soft updated target networks stabilize training:
\begin{equation}
\begin{aligned}
J_Q(\phi_i) = & 
\mathbb{E}_{(s_t, \mathbf{a}_t, r_t, s_{t+1}) \sim \mathcal{D}}
\Bigg[
\frac{1}{2} \Big( 
Q_{\phi_i}(s_t, \mathbf{a}_t) 
- \Big( r(s_t, \mathbf{a}_t) \\
&+ \gamma \big( \min_{j=1,2} Q_{\overline{\phi}_j}(s_{t+1}, \mathbf{a}_{t+1}) 
- 
\alpha \log \pi_{\theta}(\mathbf{a}_{t+1} \mid s_{t+1}) \\
&- \beta D_{\mathrm{KL}}(\pi_{\mathrm{ref}}(\cdot\mid s_{t+1}) \| \pi_{\theta}(\cdot\mid s_{t+1})) \big) 
\Big) \Big)^2
\Bigg].
\end{aligned}
\label{eq:critic_loss_full}
\end{equation}

\subsection{Theoretical Analysis}

To guarantee the soundness of the modified algorithm, we provide two key theoretical results: (1) the convergence of policy evaluation under the modified Bellman operator, and (2) the monotonic improvement property of the policy update.

First, we introduce a per-state regularizer (coming from entropy and the KL term) and denote it generically by: 
\[
\Omega_s(\pi)
:= \alpha \mathcal{H}\big(\pi(\cdot\mid s)\big)
- \beta\, D_{\mathrm{KL}}\big(\pi_{\mathrm{ref}}(\cdot\mid s)\,\|\,\pi(\cdot\mid s)\big),
\]
which depends only on the policy at state $s$ (and, importantly for the proofs below, does \emph{not} depend on the $Q$-function). We will assume that for any fixed policy $\pi$ the quantity $\Omega_s(\pi)$ is bounded uniformly in $s$; i.e. there exists a constant $C_{\Omega}$ such that: 
\[
\sup_{s\in\mathcal S} \big|\Omega_s(\pi)\big| \le C_{\Omega} < \infty.
\]

The regularized state-value and action-value functions for a fixed policy $\pi$ are defined as: 
\begin{align}
Q^{\pi}(s,\mathbf{a}) &= r(s,\mathbf{a}) + \gamma \mathbb{E}_{s' \sim p(\cdot\mid s,\mathbf{a})} \big[ V^{\pi}(s') \big], \label{eq:Qpi_def}\\
V^{\pi}(s) &= \mathbb{E}_{\mathbf{a}\sim\pi(\cdot\mid s)}\big[ Q^{\pi}(s,\mathbf{a}) \big] + \Omega_s(\pi). \label{eq:Vpi_def}
\end{align}

We define the policy-dependent Bellman operator $\mathcal{T}^{\pi}$ acting on any function $Q:\mathcal{S}\times\mathcal{A}\to\mathbb{R}$ by: 
\begin{equation}\label{eq:Tpi_def}
(\mathcal{T}^{\pi} Q)(s,\mathbf{a})
:= r(s,\mathbf{a}) + \gamma \mathbb{E}_{s' \sim p(\cdot\mid s,\mathbf{a})}
\Big[\,\mathbb{E}_{\mathbf{a}'\sim\pi(\cdot\mid s')}[Q(s',\mathbf{a}')] + \Omega_{s'}(\pi) \Big].
\end{equation}
Note that for the fixed policy $\pi$, the true $Q^\pi$ is the unique fixed point of $\mathcal{T}^\pi$.

\begin{theorem}[Policy Evaluation]\label{theorem:Evaluation}
Assume the reward $r(s,\mathbf{a})$ is bounded and the regularization term $\Omega_s(\pi)$ for a fixed policy $\pi$ is bounded. Then the operator $\mathcal{T}^\pi$ is a $\gamma$-contraction in the $\|\cdot\|_\infty$ norm. The iteration $Q \leftarrow \mathcal{T}^\pi Q$ converges uniformly to the unique fixed point $Q^\pi$ from any initial value.
\end{theorem}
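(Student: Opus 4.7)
The plan is to establish the $\gamma$-contraction directly from the definition of $\mathcal{T}^\pi$, then invoke Banach's fixed-point theorem. The underlying Banach space will be $\mathcal{B}(\mathcal{S}\times\mathcal{A})$, the space of bounded real-valued functions on $\mathcal{S}\times\mathcal{A}$ equipped with $\|\cdot\|_\infty$, which is complete.

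First I would verify that $\mathcal{T}^\pi$ is a well-defined self-map on this space. Using the assumed boundedness of $r$ (say $|r|\le R_{\max}$) and of $\Omega_s(\pi)$ (by $C_\Omega$), a quick bound shows that if $\|Q\|_\infty \le M$, then $\|\mathcal{T}^\pi Q\|_\infty \le R_{\max} + \gamma(M + C_\Omega) < \infty$, so $\mathcal{T}^\pi$ indeed sends bounded functions to bounded functions.

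Next, for the contraction step, I would take any two bounded $Q_1, Q_2$ and look at $(\mathcal{T}^\pi Q_1 - \mathcal{T}^\pi Q_2)(s,\mathbf{a})$. The crucial observation — which the paper already isolates by writing $\Omega_s(\pi)$ as depending only on $\pi$ and $s$ — is that the reward term and the $\Omega_{s'}(\pi)$ term are identical on both sides and therefore cancel. What remains is
\begin{equation*}
(\mathcal{T}^\pi Q_1 - \mathcal{T}^\pi Q_2)(s,\mathbf{a}) = \gamma\,\mathbb{E}_{s'\sim p(\cdot\mid s,\mathbf{a})}\Big[\mathbb{E}_{\mathbf{a}'\sim\pi(\cdot\mid s')}\bigl[Q_1(s',\mathbf{a}') - Q_2(s',\mathbf{a}')\bigr]\Big].
\end{equation*}
Pulling the absolute value inside the two expectations (Jensen) and bounding the integrand uniformly by $\|Q_1-Q_2\|_\infty$ yields $|(\mathcal{T}^\pi Q_1-\mathcal{T}^\pi Q_2)(s,\mathbf{a})|\le \gamma\|Q_1-Q_2\|_\infty$. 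Taking the supremum over $(s,\mathbf{a})$ gives the desired contraction with modulus $\gamma\in(0,1)$.

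Finally, Banach's fixed-point theorem delivers existence and uniqueness of a fixed point, together with uniform convergence of the iterates $Q \leftarrow \mathcal{T}^\pi Q$ from any bounded initialization at geometric rate $\gamma^k$; identifying that fixed point with the $Q^\pi$ defined by \eqref{eq:Qpi_def}--\eqref{eq:Vpi_def} is immediate since substituting those definitions into $\mathcal{T}^\pi$ returns $Q^\pi$. There is no real obstacle in the argument; the only subtle point worth stressing in the write-up is that the contraction hinges on $\Omega_s(\pi)$ being policy-dependent but $Q$-independent, so it cancels in the difference — this is exactly why the asymmetric KL direction chosen in the objective (to keep $\pi_{\mathrm{ref}}$ fixed in the regularizer) does not jeopardize convergence. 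The boundedness assumption on $\Omega_s(\pi)$ is used only to keep $\mathcal{T}^\pi$ within $\mathcal{B}(\mathcal{S}\times\mathcal{A})$, not in the contraction estimate itself.
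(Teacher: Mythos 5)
Your proposal is correct and follows essentially the same argument as the paper's proof: cancel the reward and $\Omega_{s'}(\pi)$ terms in the difference $(\mathcal{T}^\pi Q_1-\mathcal{T}^\pi Q_2)$, bound the remaining nested expectation by $\gamma\|Q_1-Q_2\|_\infty$, take the supremum, and invoke the Banach fixed-point theorem. Your added remarks on the self-map property and the identification of the fixed point with $Q^\pi$ are minor (welcome) elaborations, not a different route.
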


\begin{proof}
See Appendix~\ref{proof:pe}
\end{proof}

\begin{theorem}[Policy Improvement]\label{theorem:pi}
Given any policy $\pi$, define:
\[
\pi^{\prime}(\cdot|s) = \arg\max_{\mu(\cdot|s)} \left\{ \mathbb{E}_{\mathbf{a}\sim\mu}[Q^\pi(s,\mathbf{a})] + \Omega_s(\mu) \right\}
\]
Then $V^{\pi^{\prime}}(s) \ge V^\pi(s)$ for all $s$, and $J(\pi^{\prime}) \ge J(\pi)$.
\end{theorem}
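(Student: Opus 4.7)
My plan is to mimic the classical soft policy improvement argument, adapted to the more general regularizer $\Omega_s(\mu)$ defined in the excerpt. The crucial observation is that $\Omega_s$ depends only on the policy at $s$ (not on $Q$), so it behaves just like an entropy term for the purposes of monotonicity of the Bellman operator $\mathcal{T}^{\pi}$ introduced in Equation~\eqref{eq:Tpi_def}.

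First, I would start from the defining variational property of $\pi'$. By construction, for every $s$,
\[
\mathbb{E}_{\mathbf{a}\sim\pi'(\cdot|s)}\!\big[Q^{\pi}(s,\mathbf{a})\big] + \Omega_s(\pi')
\;\ge\;
\mathbb{E}_{\mathbf{a}\sim\pi(\cdot|s)}\!\big[Q^{\pi}(s,\mathbf{a})\big] + \Omega_s(\pi)
\;=\; V^{\pi}(s),
\]
where the last equality uses Equation~\eqref{eq:Vpi_def}. Plugging this inequality into the definition of $\mathcal{T}^{\pi'}$ at $Q^{\pi}$ gives, pointwise in $(s,\mathbf{a})$,
\[
(\mathcal{T}^{\pi'}Q^{\pi})(s,\mathbf{a})
= r(s,\mathbf{a}) + \gamma\,\mathbb{E}_{s'}\!\Big[\mathbb{E}_{\mathbf{a}'\sim\pi'}[Q^{\pi}(s',\mathbf{a}')] + \Omega_{s'}(\pi')\Big]
\;\ge\; r(s,\mathbf{a}) + \gamma\,\mathbb{E}_{s'}[V^{\pi}(s')]
= Q^{\pi}(s,\mathbf{a}).
\]

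Next I would iterate $\mathcal{T}^{\pi'}$. The operator is monotone: if $Q_1\le Q_2$ pointwise then $\mathcal{T}^{\pi'}Q_1\le\mathcal{T}^{\pi'}Q_2$, because $\Omega_{s'}(\pi')$ is independent of $Q$ and the remaining terms involve expectations of $Q$ with nonnegative weights. Applying monotonicity repeatedly to $Q^{\pi}\le \mathcal{T}^{\pi'}Q^{\pi}$ yields $Q^{\pi}\le (\mathcal{T}^{\pi'})^{n}Q^{\pi}$ for all $n\ge 1$. By Theorem~\ref{theorem:Evaluation}, $\mathcal{T}^{\pi'}$ is a $\gamma$-contraction on $\|\cdot\|_\infty$ with unique fixed point $Q^{\pi'}$, so $(\mathcal{T}^{\pi'})^{n}Q^{\pi}\to Q^{\pi'}$ uniformly. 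Passing to the limit preserves the inequality and gives $Q^{\pi'}(s,\mathbf{a})\ge Q^{\pi}(s,\mathbf{a})$ everywhere. Then
\[
V^{\pi'}(s) = \mathbb{E}_{\mathbf{a}\sim\pi'}\!\big[Q^{\pi'}(s,\mathbf{a})\big] + \Omega_s(\pi')
\;\ge\; \mathbb{E}_{\mathbf{a}\sim\pi'}\!\big[Q^{\pi}(s,\mathbf{a})\big] + \Omega_s(\pi')
\;\ge\; V^{\pi}(s),
\]
which is the first claim. The objective inequality $J(\pi')\ge J(\pi)$ then follows by taking the expectation of $V^{\pi'}(s_0)\ge V^{\pi}(s_0)$ under the initial state distribution, since $J(\pi)$ can be written (up to the regularization absorbed by the $\Omega$-enhanced value function) as $\mathbb{E}_{s_0}[V^{\pi}(s_0)]$.

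The only non-routine step is justifying the limit transfer $(\mathcal{T}^{\pi'})^{n}Q^{\pi}\to Q^{\pi'}$ together with the inequality, but this is immediate from Theorem~\ref{theorem:Evaluation}: uniform convergence in $\|\cdot\|_\infty$ preserves pointwise $\le$. A minor thing I would also be explicit about is that the $\arg\max$ defining $\pi'$ is attained (e.g.\ because the finite action space makes the feasible set of distributions compact and the regularized objective continuous in $\mu$), so that $\pi'$ is a well-defined policy and $\Omega_s(\pi')$ inherits the uniform bound assumed in the excerpt, making Theorem~\ref{theorem:Evaluation} applicable to $\mathcal{T}^{\pi'}$.
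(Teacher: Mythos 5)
Your proof is correct and takes essentially the same route as the paper's: the variational inequality from the definition of $\pi'$, combined with monotonicity and the $\gamma$-contraction property of the policy-specific regularized Bellman operator, iterated until convergence to the fixed point, with the limit preserving the pointwise inequality. The only cosmetic difference is that you iterate the operator on $Q$-functions from Eq.~\eqref{eq:Tpi_def} to get $Q^{\pi'}\ge Q^{\pi}$ and then deduce $V^{\pi'}\ge V^{\pi}$, whereas the paper defines the analogous operator directly on state-value functions and obtains $V^{\pi'}\ge V^{\pi}$ in one pass; both are equally valid, and your added remarks on attainment of the $\arg\max$ and boundedness of $\Omega_s(\pi')$ are sensible.
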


\begin{proof}
See Appendix~\ref{proof:pi}
\end{proof}

Based on the established results of policy evaluation and policy improvement, policy iteration is guaranteed to converge. Specifically, let the sequence of policies $\{\pi_k\}$ be generated by alternately performing policy evaluation to obtain $V^{\pi_k}$ and improving the policy greedily to obtain $\pi_{k+1}$. By the policy improvement theorem, for each state it holds that $V^{\pi_{k+1}} \ge V^{\pi_k}$, and the value function sequence is monotonically increasing and bounded, thus converging to a limit $\bar V$. In the case of a finite policy space, the sequence reaches a fixed policy $\pi^*$ in a finite number of steps, which is greedy with respect to $\bar V$ and satisfies the regularized Bellman optimality equation. Therefore, policy iteration converges and is valid.

\subsection{Adaptive Regularization Coefficients}
Following the spirit of the SAC, we formulate the reinforcement learning objective as a constrained optimization problem. For simplicity, we consider the undiscounted case with $\gamma=1$. Two types of constraints are imposed: (i) an entropy constraint that ensures sufficient exploration; and (ii) a KL-divergence constraint that prevents the learned policy from deviating excessively from a given reference policy. Formally, we write
\[\begin{aligned}
\max_{\pi_0,\ldots,\pi_T}\quad&\mathbb{E}\Bigg[\sum_{t=0}^T r(s_t,\mathbf{a}_t)\Bigg]\\
\text{s.t.} \quad \forall t,~~
&\mathbb{E}_{(s_t,\mathbf{a}_t)\sim\rho_\pi}\Big[-\log \pi_t(\mathbf{a}_t|s_t)\Big] \;\ge\; \overline{\mathcal{H}},\\
& D_{\mathrm{KL}}\big(\pi_{\mathrm{ref}}(\cdot|s_t)\,\|\,\pi_t(\cdot|s_t)\big) \;\le\; \overline{D_{\mathrm{KL}}}.
\end{aligned}\]

To solve this problem, we adopt a (approximate) dynamic programming approach and recursively optimize policies backward in time. The optimization objective can be expressed as
\[\max_{\pi_0}\Big(\mathbb{E}[r(s_0,\mathbf{a}_0)] 
+ \max_{\pi_1}\big(\mathbb{E}[\ldots]
+ \max_{\pi_T}\mathbb{E}[r(s_T,\mathbf{a}_T)]\big)\Big).\]

At the final step $T$, we employ the Lagrangian method with dual variables $\alpha_T \ge 0$ and $\beta_T \ge 0$, yielding
\[\begin{aligned}
\max_{\pi_T}\; \mathbb{E}[r(s_T,\mathbf{a}_T)]
&=\min_{\alpha_T\ge 0}\;\min_{\beta_T\ge 0}\;\max_{\pi_T} 
\Big\{ \mathbb{E}\big[ r(s_T,\mathbf{a}_T) \big]\\
&\quad + \alpha_T \big(-\log \pi_T(\mathbf{a}_T|s_T) - \overline{\mathcal{H}}\big) \\
&\quad + \beta_T \Big(\overline{D_{\mathrm{KL}}} - D_{\mathrm{KL}}\big[\pi_{\mathrm{ref}}(\cdot|s_T)\,\|\,\pi_T(\cdot|s_T)\big]\Big) \Big\}.
\end{aligned}\]
Since the objective is linear in $\pi_T$ and the constraint functions (entropy and KL divergence) are convex, strong duality holds. Once the optimal policy $\pi_T^*(\mathbf{a}_T|s_T)$ is obtained, the corresponding dual variables can be updated as
\[
\alpha_T^*=\arg\min_{\alpha_T\ge0}\;
\mathbb{E}_{(s_T,\mathbf{a}_T)\sim\rho_{\pi_T^*}}
\Big[-\alpha_T \log \pi_T^*(\mathbf{a}_T|s_T) - \alpha_T \overline{\mathcal{H}}\Big],
\]
\[
\beta_T^*=\arg\min_{\beta_T\ge0}\;
\mathbb{E}_{s_T\sim\rho_{\pi_T^*}}
\Big[\beta_T \overline{D_{\mathrm{KL}}} - \beta_T D_{\mathrm{KL}}\big(\pi_{\mathrm{ref}}(\cdot|s_T)\,\|\,\pi_T(\cdot|s_T)\big)\Big].
\]

We define the regularized soft $Q$-function as
\[
\begin{aligned}
Q_t^*(s_t,\mathbf{a}_t)
&= \mathbb{E}\big[r(s_t,\mathbf{a}_t)\big] + \mathbb{E}_{(s_{t+1},\mathbf{a}_{t+1})\sim\rho_{\pi^*}}\Big[
Q_{t+1}^*(s_{t+1},\mathbf{a}_{t+1}) \\
&- \alpha_{t+1}^* \log \pi_{t+1}^*(\mathbf{a}_{t+1}|s_{t+1}) \\
&- \beta_{t+1}^* D_{\mathrm{KL}}\big(\pi_{\mathrm{ref}}(\cdot|s_{t+1})\,\|\,\pi_{t+1}^*(\cdot|s_{t+1})\big)\Big],
\end{aligned}
\]
with $Q_T^*(s_T,\mathbf{a}_T)=\mathbb{E}\big[r(s_T,\mathbf{a}_T)\big].$

At time $T-1$, the optimization problem becomes
\[
\begin{aligned}
&\max_{\pi_{T-1}}\Big(
\mathbb{E}[r(s_{T-1},a_{T-1})] 
+ \max_{\pi_T}\mathbb{E}[r(s_T,a_T)]
\Big)\\
&= \max_{\pi_{T-1}}\Big(
Q_{T-1}^*(s_{T-1},a_{T-1})
- \alpha_T^*\mathcal{H}(\pi_T^*)
- \beta_T^* D_{\mathrm{KL}}(\pi_{\mathrm{ref}}\|\pi_T^*) \Big),
\end{aligned}
\]
subject to the same entropy and KL constraints. Applying the Lagrangian method again, we obtain
\[
\begin{aligned}
&\min_{\alpha_{T-1}\ge 0}\;\min_{\beta_{T-1}\ge 0}\;
\max_{\pi_{T-1}}\Big[
Q_{T-1}^*(s_{T-1},a_{T-1})
+ \alpha_{T-1}\big(\mathcal{H}(\pi_{T-1}) - \overline{\mathcal{H}}\big) \\
&\quad + \beta_{T-1}\big(\overline{D_{\mathrm{KL}}} - D_{\mathrm{KL}}(\pi_{\mathrm{ref}}\|\pi_{T-1})\big)\Big]
- \alpha_T^*\mathcal{H}(\pi_T^*)
- \beta_T^* D_{\mathrm{KL}}(\pi_{\mathrm{ref}}\|\pi_T^*).
\end{aligned}
\]
The dual variables $\alpha_{T-1}^*$ and $\beta_{T-1}^*$ are updated in the same way as at step $T$.

By recursion, for any step $t$ we obtain the optimal policy $\pi_t^*(\mathbf{a}_t|s_t)$ together with dual variable updates:
\[
\alpha_t^*=\arg\min_{\alpha_t\ge0}\;
\mathbb{E}_{(s_t,\mathbf{a}_t)\sim\rho_{\pi_t^*}}
\Big[-\alpha_t \log \pi_t^*(\mathbf{a}_t|s_t) - \alpha_t \overline{\mathcal{H}}\Big],
\]
\[
\beta_t^*=\arg\min_{\beta_t\ge0}\;
\mathbb{E}_{s_t\sim\rho_{\pi_t^*}}
\Big[\beta_t \overline{D_{\mathrm{KL}}} - \beta_t D_{\mathrm{KL}}\big(\pi_{\mathrm{ref}}(\cdot|s_t)\,\|\,\pi_t^*(\cdot|s_t)\big)\Big].
\]

According to dual gradient descent theory, the parameters $\alpha$ and $\beta$ should be updated after fully optimizing the primal variables. However, exact optimization is infeasible when these variables are represented by neural networks. Instead, we adopt a truncated approximation: after a few gradient updates on the policy, $\alpha$ and $\beta$ are directly updated via gradient descent on their dual objectives:

\begin{equation}\label{eq:alpha_update_full}
J(\alpha) = \mathbb{E}_{s_t \sim \mathcal{D},\;\mathbf{a}_t\sim\pi_\theta}
\Big[-\alpha \log \pi_\theta(\mathbf{a}_t|s_t) - \alpha \overline{\mathcal{H}}\Big],
\end{equation}
\begin{equation}\label{eq:beta_update_full}
J(\beta) = \mathbb{E}_{s_t \sim \mathcal{D}}
\Big[-\beta D_{\mathrm{KL}}(\pi_{\mathrm{ref}}(\cdot|s_t)\,\|\,\pi_\theta(\cdot|s_t)) + \beta \overline{D_{\mathrm{KL}}}\Big].
\end{equation}

Although this relaxation violates the convexity assumptions of classical duality theory, prior work \cite{haarnoja2018soft2} has shown that it remains stable and effective in practice, enabling $\alpha$ and $\beta$ to adaptively adjust entropy and regularization strengths during training.

\subsection{Algorithm Description}

The complete training procedure of ARAC is summarized in Algorithm~\ref{alg:arac}. The main differences from standard SAC are the modified value function update and the additional optimization of the regularization coefficient $\beta$.

\begin{algorithm}[h!]
\caption{Adaptive Regularized Multi-Agent Soft Actor-Critic (ARAC)}
\label{alg:arac}
\KwIn{Replay buffer $\mathcal{D}$, policy $\pi_{\theta}$, two critics $Q_{\phi_1}, Q_{\phi_2}$ with target networks $Q_{\bar{\phi}_1}, Q_{\bar{\phi}_2}$, reference policy $\pi_{\mathrm{ref}}$}
\KwOut{Optimized policy $\pi_{\theta}$}
Initialize learnable parameters $\alpha$ and $\beta$\;
\While{not converged}{
    Observe state $s$ and select joint action $\mathbf{a} \sim \pi_{\theta}(\cdot \mid s)$\;
    Execute $\mathbf{a}$, observe reward $r$ and next state $s'$\;
    Store $(s, \mathbf{a}, r, s')$ into $\mathcal{D}$\;
    Sample mini-batch from $\mathcal{D}$\;
    Update both critics $Q_{\phi_1}, Q_{\phi_2}$ using Eq.~\eqref{eq:critic_loss_full} with target networks $Q_{\bar{\phi}_1}, Q_{\bar{\phi}_2}$\;
    Update policy using Eq.~\eqref{eq:policy_loss_full}\;
    Adjust $\alpha$ using Eq.~\eqref{eq:alpha_update_full}\;
    Adjust $\beta$ using Eq.~\eqref{eq:beta_update_full}\;
    Soft update target networks $Q_{\bar{\phi}_1}, Q_{\bar{\phi}_2}$\;
}
\end{algorithm}

\section{Experiments}

\subsection{Experimental Settings}

\subsubsection{Scenarios and Map Configurations}
To evaluate the effectiveness of the proposed ARAC in graph-structured multi-agent tasks, we conduct experiments in two different scenarios: pursuit and confrontation.

\begin{itemize}
    \item \textbf{Pursuit Scenario}: The number of pursuers is set to \(m=2\). The experimental maps are selected from a set of 152 Dungeon maps, with the number of nodes ranging from 50 to 250\cite{lu2025equilibrium}. These maps have complex topologies with numerous loops and bottlenecks, which pose challenges for path planning and coordinated pursuit. The reward function is set to $r_\text{capture}=30$.
    \item \textbf{Confrontation Scenario}: The number of agents on each side is set to \(m=3\). A fixed map with 100 nodes is used, featuring multiple intersecting corridors and blocking regions, testing both offensive and defensive decision-making abilities. The reward function is set to $r_\text{kill}=3$, and $r_\text{all\_kill}=20$.
\end{itemize}

This design ensures that the pursuit scenario emphasizes dynamic target tracking and path planning, while the confrontation scenario focuses more on strategic game-playing and cooperative confrontation, providing a comprehensive evaluation of adaptability.

\subsubsection{Reference Policy Design}
Due to the sparse reward nature of the environments, we incorporate a reference policy as a reference signal for policy regularization. While not optimal, the reference policy significantly improves exploration efficiency in the early training stages.

\begin{itemize}
    \item \textbf{Pursuit Scenario}: The reference policy follows the shortest path to approach the current evader's position. Although computationally efficient, it may lead to endless chasing loops in maps containing cycles, resulting in failure.
    \item \textbf{Confrontation Scenario}: The reference policy is a rule-based attack-and-move strategy:  
    (1) If a living enemy agent is within a sensing range (set to 2 in our experiments), attack the closest living enemy;  
    (2) Otherwise, move along the shortest path toward the nearest living enemy.
\end{itemize}

While effective in small-scale engagements, these strategies lack the ability to optimize global tactics.

\subsection{Baseline Methods}
We compare ARAC against the following baselines:

\begin{enumerate}
    \item \textbf{BRAC \cite{wu2019behavior}}: An offline RL method that constrains the policy update with a divergence function, using a fixed coefficient \(\beta\). To match our setup, we adapt BRAC to an online version with KL divergence as the divergence function.
    \item \textbf{BC}: A pure supervised learning approach that fits the policy distribution to the reference policy without any RL loss.
    \item \textbf{Reference Policy}: The reference policy itself is used to directly interact with the environment, serving as a performance lower bound.
\end{enumerate}

This comparison design enables us to validate the effectiveness of adaptive \(\beta\) and distinguish its contribution from the reference policy alone.

\subsection{Algorithm Comparison }

\begin{figure}[h!]
	\centering    
	\subfigure[Pursuit scenario]{									\includegraphics[width=0.47\linewidth]{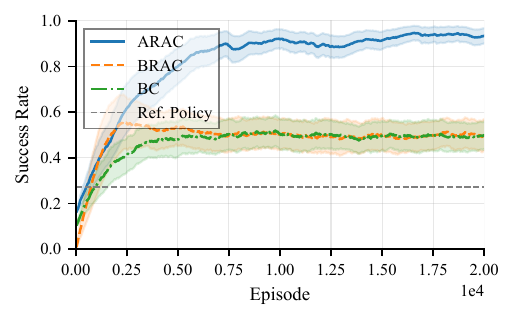}
    }
  \subfigure[Confrontation scenario]{									
		\includegraphics[width=0.47\linewidth]{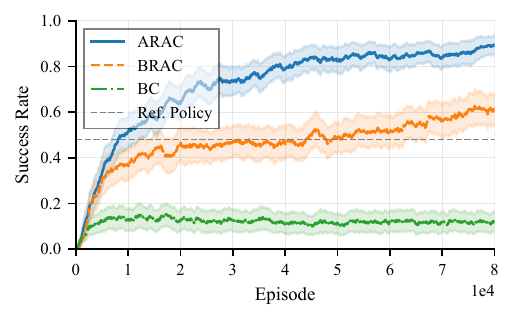}
        }
 
	\caption{Success rate curves of our method and baseline algorithms in pursuit and confrontation scenario. Shaded regions denote the standard deviation over 3 runs.}
 \label{fig:comparison}
\end{figure}

Figure~\ref{fig:comparison} present the success rate curves of different algorithms over training episodes in the pursuit and confrontation scenarios, respectively. In both tasks, the proposed ARAC method demonstrates clear advantages over all baselines.

In the \textbf{pursuit scenario}, the reference policy has a fixed success rate of $0.27$, indicating that it is suboptimal. ARAC achieves rapid performance improvement in the early stages of training, reaching a success rate close to $1.0$ within approximately $0.5\times 10^4$ to $0.75\times 10^4$ episodes, and maintaining stability thereafter. In comparison, BRAC gradually improves but converges more slowly and attains a slightly lower final success rate. BC, which relies solely on supervised learning to fit the reference policy, is limited by the quality of the reference policy itself, resulting in a relatively flat curve with limited improvement. 

In the \textbf{confrontation scenario}, the reference policy has a fixed success rate of $0.48$, also indicating that it is suboptimal. ARAC continues to significantly outperform other methods. It achieves a success rate close to $1.0$ within roughly $2\times 10^4$ episodes, with both faster convergence and higher stability compared to BRAC. BRAC’s convergence speed and final performance are again inferior to ARAC, while BC performs even more poorly in this scenario, showing minimal success rate improvement.

Overall, ARAC achieves both faster convergence and higher final success rates in both scenarios. This validates that the adaptive $\beta$ mechanism effectively exploits the reference policy for exploration guidance in early training, while progressively reducing reliance on it in later stages. Consequently, ARAC avoids the performance bottlenecks caused by suboptimal reference policies and facilitates the learning of superior strategies.

\subsection{Ablation Studies}
\subsubsection{Graph Neural Network Variants}
To analyze the impact of different graph representation methods on multi-agent cooperation and competition tasks, we compare against two widely used GNN architectures:

\begin{itemize}
    \item \textbf{GCN} \cite{KipfW17GCN}: Spectral convolution-based node aggregation with fixed adjacency matrices.
    \item \textbf{GAT} \cite{velivckovic2017graph}: Introduces attention weights for neighbors during message passing, enabling adaptive neighbor weighting.
\end{itemize}

\begin{figure}[h!]
	\centering    
	\subfigure[Pursuit scenario]{									\includegraphics[width=0.47\linewidth]{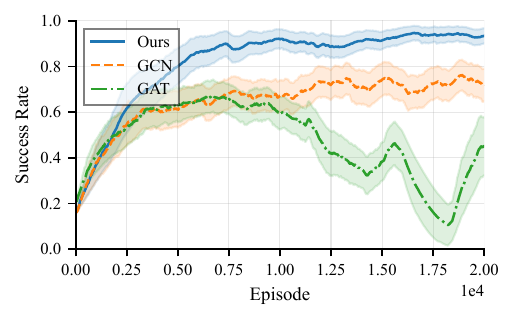}
    }
  \subfigure[Confrontation scenario]{									
		\includegraphics[width=0.47\linewidth]{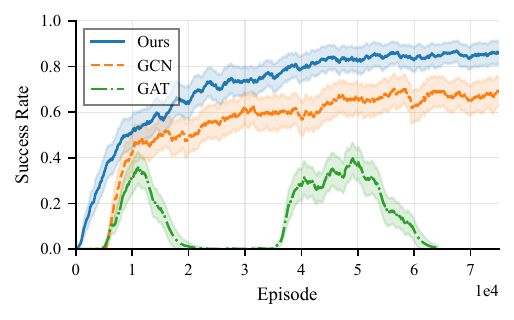}
        }
 
	\caption{Success rate curves of our method and other feature representation approaches in pursuit and confrontation scenario. Shaded regions denote the standard deviation over 3 runs.}
 \label{fig:gnn}
\end{figure}

Using ARAC as the training algorithm, we evaluate our method alongside GCN and GAT as graph-structured feature representation approaches. The training success rate curves for both the pursuit and confrontation scenarios are shown in Figure~\ref{fig:gnn}. The results clearly indicate that our method consistently outperforms the baselines in both tasks. As training progresses, our method’s success rate rapidly converges to a stable value close to $1.0$. In contrast, GCN plateaus during the mid-training phase, with its success rate hovering around $0.7$ in both scenarios, highlighting its limited learning capability. GAT performs the worst, exhibiting training collapse issues with severe curve oscillations throughout the training process, which reflects its instability.

\subsubsection{Scalability Experiments: Varying the Number of Agents}
A key challenge in MARL is the combinatorial explosion caused by increasing the number of agents, which leads to exponential growth in joint state and action spaces. To assess scalability, we extend the default \textit{3v3} confrontation scenario to \textit{4v4} and \textit{5v5} settings, evaluating changes in training stability and convergence speed.

\begin{figure}[h!]
  \centering
  \includegraphics[width=0.65\linewidth]{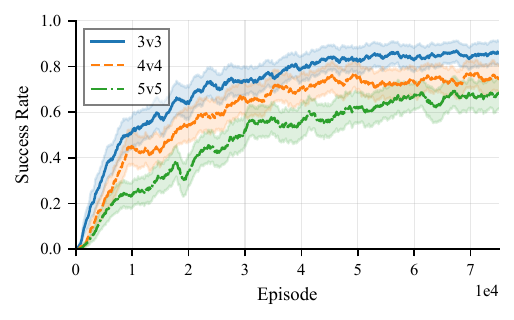}
  \caption{Success rate curves with varying numbers of agents in the confrontation scenario. Shaded regions denote the standard deviation over 3 runs.}
  \label{fig:agentnum}
\end{figure}

Using the ARAC algorithm, we trained agents in confrontation scenarios with varying numbers of agents, and the success rate curves are shown in Figure~\ref{fig:agentnum}. The experimental results demonstrate that ARAC can achieve stable training even with a larger number of agents, attaining success rates exceeding $0.6$.

\subsubsection{The Guidance of Reference Policy}

\begin{figure}[h!]
  \centering
  \includegraphics[width=0.65\linewidth]{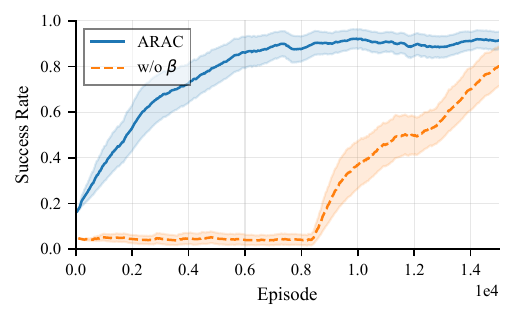}
  \caption{Success rate curves of ARAC and its variant without the reference policy term $\beta$ in the pursuit scenario. Shaded regions denote the standard deviation over 3 runs.}
  \label{fig:noBeta}
\end{figure}

To investigate the importance of the reference policy term $\beta$, we conduct an ablation study in the pursuit scenario, where the reward structure is more sparse compared to confrontation environments. We compare the proposed ARAC method with its variant that removes the reference policy regularization, denoted as \textit{w/o $\beta$}. The success rate curves are shown in Fig.~\ref{fig:noBeta}. 

It can be observed that \textit{w/o $\beta$} remains at a low success rate for the first $0.8\times10^4$ training steps before gradually increasing. In contrast, ARAC exhibits a rapid improvement from the beginning of training, reaching nearly perfect performance around $0.7\times10^4$ steps. This demonstrates that the reference policy term significantly accelerates learning in sparse-reward environments.

\section{Conclusion}
In this work, we introduced the ARAC framework for MARL on graph-structured pursuit and confrontation scenarios. ARAC integrates graph-based feature construction with an attention-driven encoder-decoder GNN to capture agents’ spatial relations and interaction dynamics. Furthermore, by adaptively adjusting the regularization weight in divergence-based policy optimization, ARAC is able to exploit reference policy guidance in the early stage of training while mitigating the risk of over-reliance in the long run. Extensive experiments on both pursuit and confrontation scenarios demonstrate that ARAC consistently outperforms strong baselines and maintains stable scalability with increasing agent numbers. In future work, we plan to extend ARAC to larger-scale multi-agent systems with dynamic graph topologies, and investigate its integration with multi-modal perception and decision-making to further improve adaptability and generalization.

\begin{acks}
This work was supported in part by the National Natural Science Foundation of China under Grants 62293541, 62136008, and 62206281, in part by Beijing Natural Science Foundation under Grant No 4232056, and in part by Beijing Nova Program under Grant 20240484514.
\end{acks}

\bibliographystyle{ACM-Reference-Format}
\bibliography{sample-base}

\appendix
\section{Proofs}
\subsection{Policy Evaluation}\label{proof:pe}
\begin{proof}
Fix arbitrary functions $Q_1,Q_2:\mathcal{S}\times\mathcal{A}\to\mathbb{R}$. For any state-action pair $(s,\mathbf{a})$ we have, by the definition \eqref{eq:Tpi_def},
\begin{align*}
&\big(\mathcal{T}^{\pi} Q_1\big)(s,\mathbf{a}) - \big(\mathcal{T}^{\pi} Q_2\big)(s,\mathbf{a}) \\
&\quad = \bigg( r(s,\mathbf{a}) + \gamma \mathbb{E}_{s'} \Big[ \mathbb{E}_{\mathbf{a}'\sim\pi} [Q_1(s',\mathbf{a}')] + \Omega_{s'}(\pi) \Big] \bigg) \\
&\qquad - \bigg( r(s,\mathbf{a}) + \gamma \mathbb{E}_{s'} \Big[ \mathbb{E}_{\mathbf{a}'\sim\pi} [Q_2(s',\mathbf{a}')] + \Omega_{s'}(\pi) \Big] \bigg) \\
&\quad = \gamma \, \mathbb{E}_{s' \sim p(\cdot\mid s,\mathbf{a})} \Big[ \mathbb{E}_{\mathbf{a}'\sim\pi} \big[ Q_1(s',\mathbf{a}') - Q_2(s',\mathbf{a}') \big] \Big].
\end{align*}

Taking absolute value and using the triangle inequality and linearity of expectations,
\begin{align*}
&\big| (\mathcal{T}^{\pi} Q_1)(s,\mathbf{a}) - (\mathcal{T}^{\pi} Q_2)(s,\mathbf{a}) \big| \\
&\quad = \gamma \left| \mathbb{E}_{s'}\Big[ \mathbb{E}_{\mathbf{a}'\sim\pi}\big[ Q_1(s',\mathbf{a}') - Q_2(s',\mathbf{a}') \big] \Big] \right| \\
&\quad \le \gamma \, \mathbb{E}_{s'}\Big[ \mathbb{E}_{\mathbf{a}'\sim\pi} \big| Q_1(s',\mathbf{a}') - Q_2(s',\mathbf{a}') \big| \Big].
\end{align*}

For any $(s',\mathbf{a}')$ we have: 
\[
\big| Q_1(s',\mathbf{a}') - Q_2(s',\mathbf{a}') \big| \le \|Q_1 - Q_2\|_\infty,
\]
therefore, 
\[
\mathbb{E}_{s'}\mathbb{E}_{\mathbf{a}'\sim\pi} \big| Q_1(s',\mathbf{a}') - Q_2(s',\mathbf{a}') \big|
\le \|Q_1 - Q_2\|_\infty.
\]

Combining the previous inequalities yields the pointwise bound: 
\[
\big| (\mathcal{T}^{\pi} Q_1)(s,\mathbf{a}) - (\mathcal{T}^{\pi} Q_2)(s,\mathbf{a}) \big|
\le \gamma \, \|Q_1 - Q_2\|_\infty,
\]
for every $(s,\mathbf{a})$. Taking supremum over $(s,\mathbf{a})$ gives: 
\[
\|\mathcal{T}^{\pi} Q_1 - \mathcal{T}^{\pi} Q_2\|_\infty \le \gamma \, \|Q_1 - Q_2\|_\infty.
\]

Because $0\le\gamma<1$, $\mathcal{T}^\pi$ is a contraction. The Banach fixed-point theorem implies that $\mathcal{T}^\pi$ has a unique fixed point $Q^\pi$ and iterating $Q_{k+1}=\mathcal{T}^\pi Q_k$ from any initial $Q_0$ converges exponentially (in the sup norm) to $Q^\pi$.

\end{proof}

\subsection{Policy Improvement}\label{proof:pi}
\begin{proof}

For any value function $V:\mathcal S\to\mathbb R$, define the operator corresponding to policy $\pi'$:
\[
\begin{aligned}
(\mathcal{T}^{\pi'} V)(s)
:= &\mathbb{E}_{\mathbf{a}\sim\pi'(\cdot\mid s)}\big[ r(s,\mathbf{a}) \big] + \Omega_s(\pi') \\
&+ \gamma \mathbb{E}_{\mathbf{a}\sim\pi'(\cdot\mid s)}\big[ \mathbb{E}_{s'\sim p(\cdot\mid s,\mathbf a)}[ V(s') ] \big].
\end{aligned}
\]
Note that the true regularized value $V^{\pi'}$ is the unique fixed point of $\mathcal{T}^{\pi'}$ (by the same contraction argument as in Theorem~\ref{theorem:Evaluation} applied to this operator).

By the definition of $\pi'$ we have, for every state $s$,
\[
\mathbb{E}_{\mathbf{a}\sim\pi'}\big[ Q^\pi(s,\mathbf{a}) \big] + \Omega_s(\pi')
\;\ge\;
\mathbb{E}_{\mathbf{a}\sim\pi}\big[ Q^\pi(s,\mathbf{a}) \big] + \Omega_s(\pi)
= V^\pi(s),
\]
where the last equality is the definition of $V^\pi(s)$ (see \eqref{eq:Vpi_def}). Rearranging, this gives:
\[
\mathbb{E}_{\mathbf{a}\sim\pi'}\big[ Q^\pi(s,\mathbf{a}) \big] + \Omega_s(\pi') - V^\pi(s) \ge 0.
\]
Since $Q^\pi(s,\mathbf{a}) = r(s,\mathbf{a}) + \gamma \mathbb{E}_{s'}[V^\pi(s')]$ (by \eqref{eq:Qpi_def}), for every state $s$,
\[
\begin{aligned}
(\mathcal{T}^{\pi'} V)(s)
:= &\mathbb{E}_{\mathbf{a}\sim\pi'(\cdot\mid s)}\big[ r(s,\mathbf{a}) \big] + \Omega_s(\pi') \\
&+ \gamma \mathbb{E}_{\mathbf{a}\sim\pi'(\cdot\mid s)}\big[ \mathbb{E}_{s'\sim p(\cdot\mid s,\mathbf a)}[ V(s') ] \big] \\
= &\mathbb{E}_{\mathbf{a}\sim\pi'}\big[ Q^\pi(s,\mathbf{a}) \big] + \Omega_s(\pi')\\
\ge& V^\pi(s).
\end{aligned}
\]
This implies that applying the operator $\mathcal{T}^{\pi'}$ once to the old value function $V^\pi$ yields a function that is pointwise no smaller than $V^\pi$. Furthermore, the operator $\mathcal{T}^{\pi'}$ is monotone: if $V_1\ge V_2$ pointwise then $\mathcal{T}^{\pi'} V_1 \ge \mathcal{T}^{\pi'} V_2$. This follows because the operator is an expectation of $V$ multiplied by nonnegative weights and plus statewise terms independent of $V$.

Combining monotonicity, we obtain:
\[
V^\pi \le \mathcal{T}^{\pi'} V^\pi
\le \big(\mathcal{T}^{\pi'}\big)^2 V^\pi
\le \cdots
\]
That is, the sequence $\{ (\mathcal{T}^{\pi'})^k V^\pi \}_{k\ge 0}$ is pointwise nondecreasing.

Since $\mathcal{T}^{\pi'}$ is a $\gamma$-contraction (by the same argument as in Theorem~\ref{theorem:Evaluation} with fixed policy $\pi'$), the sequence $(\mathcal{T}^{\pi'})^k V^\pi$ converges pointwise (and uniformly) to the unique fixed point $V^{\pi'}$. Therefore, taking the limit in the monotone chain above yields:
\[
V^\pi \le V^{\pi'} \quad\text{pointwise on } \mathcal{S}.
\]
This establishes $V^{\pi'}(s) \ge V^\pi(s)$ for every state $s$.

Finally, the global objective $J(\pi)$ equals the expectation of $V^\pi$ under the initial state distribution $\rho_0$, i.e. $J(\pi) = \mathbb{E}_{s_0\sim\rho_0}[V^\pi(s_0)]$. Hence $V^{\pi'}\ge V^\pi$ implies $J(\pi')\ge J(\pi)$.

\end{proof}

\section{Implementation Details}
The experiments were conducted on a workstation equipped with four NVIDIA RTX 6000 Ada Generation GPUs (48 GB each) and an Intel Xeon Platinum 8358 64-core Processor CPU with 1 TB RAM. The software environment consisted of Ubuntu 20.04, Python 3.10 and CUDA 12.2. Table~\ref{tab:hyperparameters} summarizes the hyperparameter settings used in all experiments.

\begin{table}[htbp]
\centering
\caption{Hyperparameter Settings in Experiments}
\label{tab:hyperparameters}
\begin{tabular}{l c}
\hline
\textbf{Hyperparameter} & \textbf{Value} \\
\hline
Optimizer & Adam \\
Batch Size & 128 \\
Replay Buffer Size & 2000 \\
Learning Rate & $10^{-5}$ \\
Max Time Steps & 128 \\
Encoder Layers & 6 \\
Decoder Layers & 1 \\
Number of Attention Heads & 8 \\
Target Entropy $\overline{\mathcal{H}}$ & $0.05 \times \log(\mathrm{dim}(\mathcal{A}))$ \\
Target KL Divergence $\overline{D_{\mathrm{KL}}}$ & 1 \\
\hline
\end{tabular}
\end{table}

\section{The Cross-Graph Generalizability}

To evaluate the generalization capability of the proposed method across different graphs, we conducted a cross-map generalization experiment. In this experiment, we selected three distinct confrontation maps (as shown in Fig.~\ref{fig:maps}), which differ significantly in terms of terrain layout, obstacle distribution, and initial position settings. These variations allow us to effectively assess the adaptability of agents to diverse spatial configurations.

\begin{figure}[h!]
\centering    
\subfigure[Map 1]{								\includegraphics[width=0.5\linewidth]{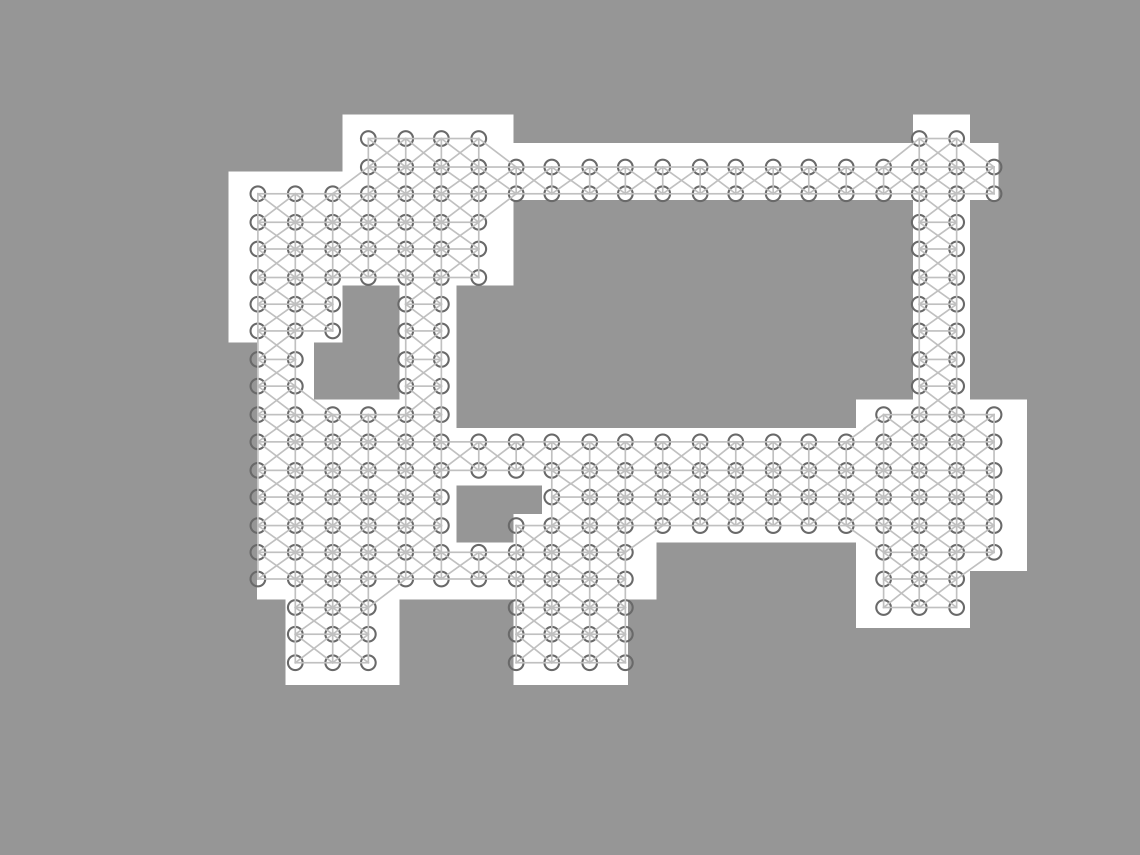}
}
\subfigure[Map 2]{\includegraphics[width=0.5\linewidth]{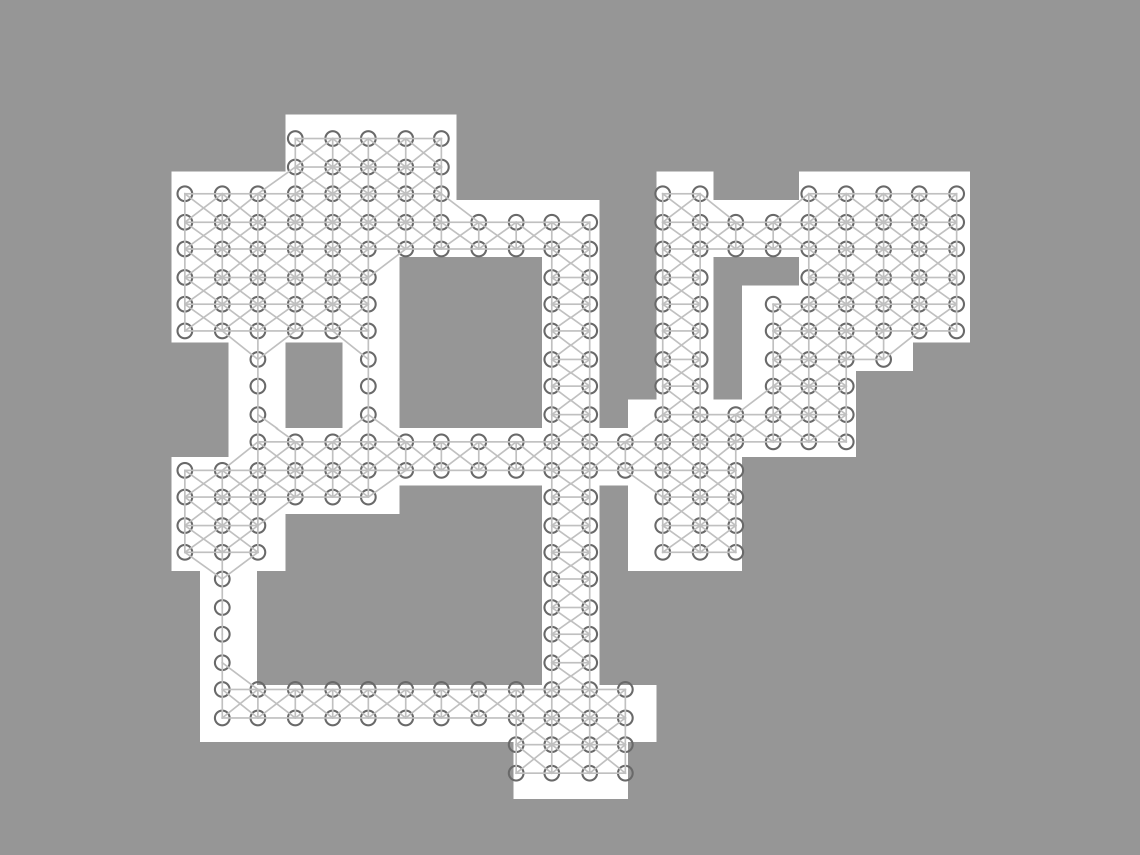}
}
\subfigure[Map 3]{
\includegraphics[width=0.5\linewidth]{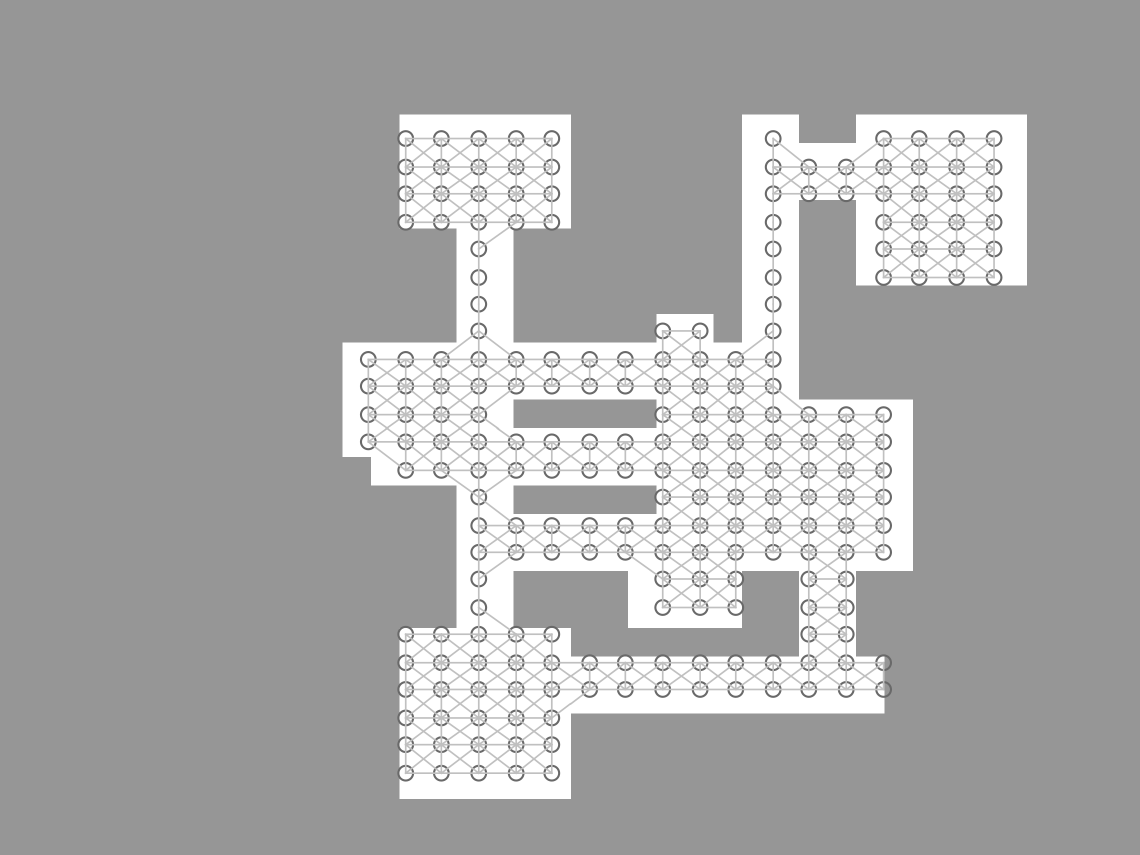}
}

\caption{Three maps used to verify cross-graph generalization.}
\label{fig:maps}
\end{figure}

For each map, we trained the agents independently and, after training, evaluated the resulting models on all three maps. This yielded a training–testing success rate matrix, as presented in Table~\ref{tab:cross-map}.

\begin{table}[h!]
\centering
\caption{Success rates of the cross-map generalization experiment.}
\renewcommand{\arraystretch}{1.2} 
\setlength{\tabcolsep}{8pt} 
\begin{tabular}{lccc}
\toprule
\multirow{2}{*}{\textbf{Test Map}} & \multicolumn{3}{c}{\textbf{Train Map}} \\
\cmidrule(lr){2-4}
 & \textbf{Map 1} & \textbf{Map 2} & \textbf{Map 3} \\ 
\midrule
\textbf{Map 1} & \textbf{0.912} & 0.910 & 0.890 \\
\textbf{Map 2} & 0.834 & \textbf{0.852} & 0.850 \\
\textbf{Map 3} & 0.766 & 0.750 & \textbf{0.778} \\ 
\bottomrule
\end{tabular}
\label{tab:cross-map}
\end{table}

The results show that the models maintain a relatively high success rate even on \textbf{unseen maps}. For example, the model trained on Map 1 achieved success rates of $0.910$ and $0.890$ on Map 2 and Map 3, respectively, only slightly lower than its performance on the training map. This demonstrates that the proposed method not only learns high-quality policies in the training environment but also transfers effectively to unseen maps, indicating strong cross-graph generalization.

\section{Self-Play for Further Performance Improvement in Confrontation Scenario}

Self-play is a widely used technique in reinforcement learning for training agents in competitive or adversarial environments. The key idea is that instead of relying solely on external opponents, the agent learns by continuously playing against versions of itself. This approach provides an ever-evolving training signal: as the agent improves, its opponent also becomes stronger, thereby creating an implicit curriculum without the need for manually designed adversaries. Self-play has been successfully applied in several landmark achievements, demonstrating its ability to produce highly robust and adaptive policies.

One of the main advantages of self-play is that it prevents overfitting to fixed opponents or scripted behaviors. Since the opponent is dynamically updated with the agent’s own policy, the learning process naturally fosters adaptability to diverse strategies and encourages the emergence of increasingly sophisticated tactics. Furthermore, self-play enables policies to generalize better to previously unseen opponents, making it especially suitable for confrontation scenarios where agents must deal with unpredictable adversarial behaviors.

In confrontation scenarios, self-play can be employed to further enhance policy performance beyond training solely against scripted or fixed opponents. Starting from the baseline policy $\pi_0$ trained with ARAC, we adopt a self-play procedure in which the agent continually plays against itself to progressively improve. This iterative adaptation ensures that the agent is not restricted to defeating a fixed set of adversaries but instead learns strategies that remain effective against increasingly stronger opponents.

To stabilize training and avoid oscillations in learning dynamics, we update the opponent policy with the latest agent policy every 100 episodes rather than at every step. This mechanism provides a moving target for the learning agent, encouraging consistent progress while preventing catastrophic forgetting of previously successful strategies. Figure~\ref{fig:sp-curve} illustrates the success rate of the latest policy when competing against the initial policy $\pi_0$ throughout the course of self-play.

\begin{figure}[h!]
  \centering
  \includegraphics[width=0.65\linewidth]{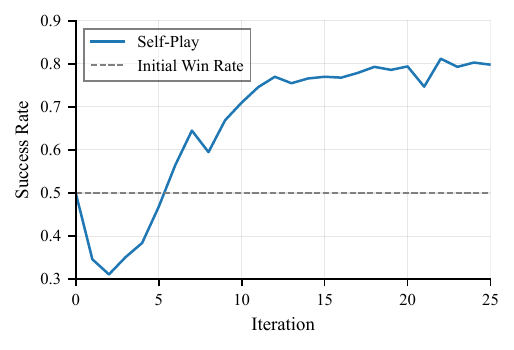}
  \caption{Success rate curves of the latest self-play policy against the initial policy $\pi_0$ in the confrontation scenario.}
  \label{fig:sp-curve}
\end{figure}

At the beginning of self-play, the win rate is approximately $0.5$, since both the agent and its opponent share the same policy. After around 25 iterations of policy updates, the latest policy achieves a win rate of nearly $0.8$ against $\pi_0$. This result demonstrates that self-play effectively improves policy performance, enabling the learned policy to adapt beyond defeating only scripted opponents and increasing its robustness to diverse adversarial behaviors.

To further validate the performance improvement during self-play, we conduct a pairwise evaluation among policies obtained from different iterations of the self-play process. Specifically, models from earlier and later iterations are matched against one another, and their win rates are recorded. The aggregated results are visualized as a heatmap in Figure~\ref{fig:sp-heatmap}. The clear upward trend in win rate along the iteration axis confirms that later policies consistently outperform earlier ones, verifying the effectiveness of the self-play training paradigm.

\begin{figure}[h!]
  \centering
  \includegraphics[width=0.6\linewidth]{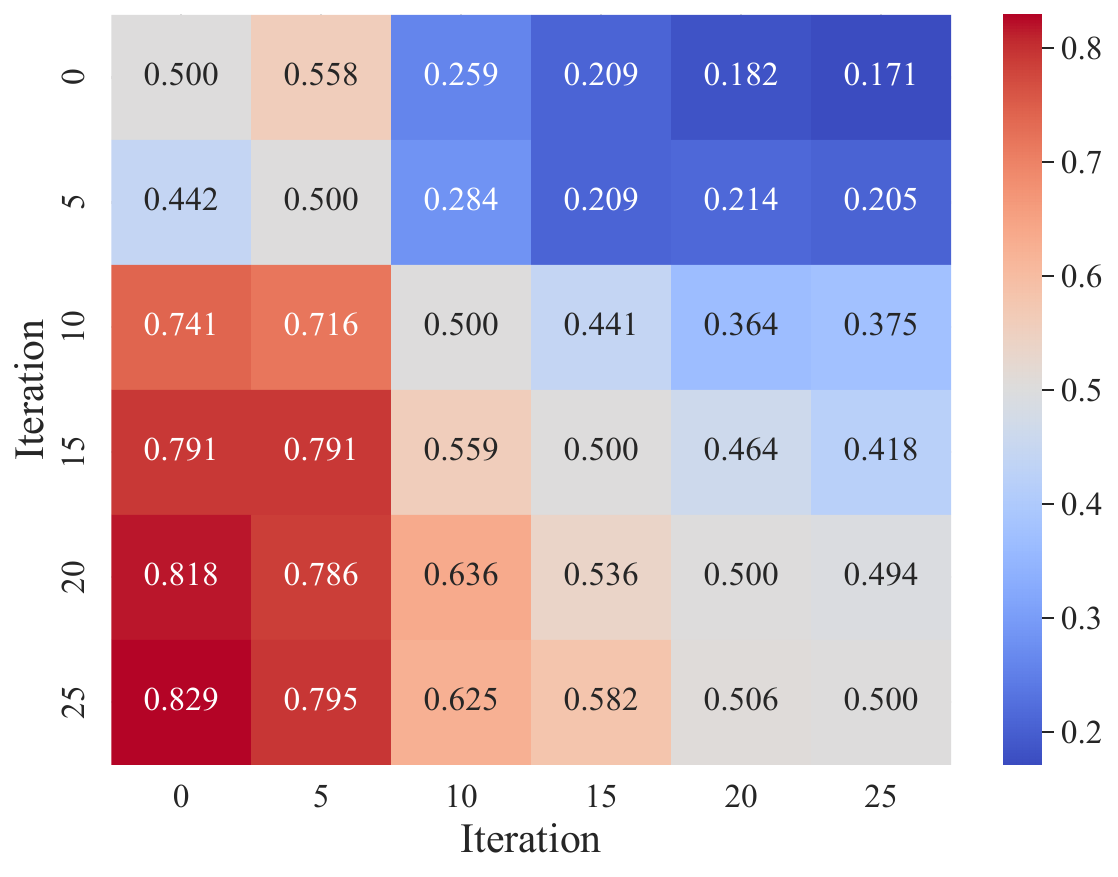}
  \caption{Pairwise win rate heatmap between policies from different self-play iterations.}
  \label{fig:sp-heatmap}
\end{figure}

Beyond quantitative improvements, self-play also provides several qualitative benefits. First, it diversifies the set of strategies that the agent encounters during training, mitigating overfitting to specific scripted opponents. Second, it encourages the emergence of novel behaviors through an implicit curriculum: as the opponent becomes stronger, the agent must discover more advanced tactics to maintain or improve its performance. Finally, by repeatedly adapting against its own evolving policy, the agent achieves a higher degree of robustness and generalization, which is crucial in dynamic multi-agent confrontation environments.

\end{document}